%
%

\documentclass[11pt,a4paper]{article}
\usepackage[hyperref]{acl2020}
\usepackage{times}
\usepackage{latexsym}
\usepackage{amsmath}
\usepackage{amssymb}
\usepackage{color}
\usepackage{booktabs}
\usepackage{multirow}
\usepackage{mathtools}
\usepackage{amsthm}
\usepackage{appendix}

\newtheorem{theorem}{Theorem}
\newtheorem{lemma}{Lemma}
\newtheorem{proposition}{Proposition}

\newcommand{\norm}[1]{\left\lVert#1\right\rVert}

\newcommand{\cB}{y_B}

\newcommand{\grs}{{g^\rs}}
\newcommand{\hatgrs}{{\hat{g}^\rs}}
\newcommand{\grsstar}{{{g}^\rs_*}}
\usepackage{microtype}

\aclfinalcopy 


\newcommand{\cutforspace}[1]{}

\title{SAFER: A Structure-free Approach for Certified Robustness to Adversarial Word Substitutions}

\author{Mao Ye\thanks{~~Equal contribution} \\
  UT Austin\\
  \texttt{my21@cs.utexas.edu} \\\And
  Chengyue Gong$^*$ \\
  UT Austin\\
  \texttt{cygong@cs.utexas.edu} \\\And
  Qiang Liu \\
  UT Austin\\
  \texttt{lqiang@cs.utexas.edu} \\
  }

\date{}

\begin{document}
\global\long\def\X{\textbf{X}}%
\global\long\def\x{{x}}%
\global\long\def\H{\text{H}}%
\global\long\def\h{\text{h}}%
\global\long\def\supp{\mathcal{X}}%
\global\long\def\Z{\textbf{Z}}%
\global\long\def\z{\textbf{z}}%
\global\long\def\rs{\text{RS}}%
\maketitle
\begin{abstract}
State-of-the-art NLP models can often be fooled by human-unaware transformations such as synonymous word substitution. For security reasons, it is of critical importance to develop models  with \emph{certified robustness} that can provably guarantee that 
the prediction is can not be altered by any possible synonymous word substitution.  
In this work, we propose a certified robust method based on a new randomized smoothing technique, which constructs a stochastic ensemble by applying random word substitutions on the input sentences, and leverage the statistical properties of the ensemble to provably certify the robustness. Our method is simple and \emph{structure-free} in that it only requires the black-box queries of the model outputs, and hence can be applied to any pre-trained models (such as BERT) and any types of models (world-level or subword-level). Our method significantly outperforms recent state-of-the-art methods for certified robustness on both IMDB and Amazon text classification tasks. 
To the best of our knowledge, we are the first work to achieve certified robustness on large systems such as BERT with practically meaningful certified accuracy. 

\end{abstract}

\section{Introduction}

Deep neural networks have achieved state-of-the-art results in many NLP tasks,
but also have been shown to be brittle to carefully crafted adversarial perturbations, such as 
replacing words with similar words ~\citep{alzantot2018generating}, 
adding extra text ~\citep{wallace2019universal}, 
and replacing sentences with semantically similar sentences ~\citep{ribeiro2018semantically}. 
These adversarial perturbations  are 
imperceptible to humans, 
but can fool deep neural networks and break their performance.  
 Efficient methods for defending these attacks 
 are of critical importance for deploying modern deep NLP models to practical automatic AI systems. 

In this paper, we focus on defending the synonymous 
word substitution attacking ~\citep{alzantot2018generating}, 
in which an attacker attempts to alter the output of the model by replacing words in the input sentence with their synonyms according to 
a synonym table, while keeping the meaning of this sentence unchanged. 
A model is said to be \emph{certified robust} if such an attack is guaranteed to  fail, no matter how the attacker manipulates the input sentences. 
 Achieving and verifying certified robustness is highly challenging even if the synonym table used by the attacker is known during training 
 \citep[see][]{jia2019certified}, 
 because it requires 
 to check every possible 
 synonymous word substitution, whose number is exponentially large. 

Various defense methods against 
 synonymous word substitution attacks have been developed 
 ~\citep[e.g.,][]{wallace2019universal, ebrahimi2017hotflip}, most of which, however, 
are 
not {certified robust} in that they may eventually be broken by stronger attackers. 
Recently, 
\citet{jia2019certified, huang2019achieving} proposed 
the first certified robust methods against word substitution attacking. Their methods are based on the interval bound propagation (IBP)  method 
\citep{dvijotham2018ibp} which computes the range of the model output by propagating the interval constraints of the inputs layer by layer. 


However, the IBP-based methods of \citet{jia2019certified, huang2019achieving} 
are limited in several ways. 
First, 
because IBP only works for certifying neural networks with continuous inputs, the inputs in \citet{jia2019certified} and \citet{huang2019achieving} are taken to be the word embedding  vectors of the input sentences, instead of the discrete sentences.
This makes 
it inapplicable to character-level~\citep{zhang2015charnn} and subword-level~\citep{bojanowski2017subword} model, which are more widely used in practice \citep{wu2016google}. 
\cutforspace{ 
Secondly, 
IBP requires to access the structures of the models, 
and needs to be derived and implemented at a model-by-model basis, 
which makes it difficult to use complex  models,  especially large-scale pre-trained models such as
BERT ~\citep{devlin2018bert}. 
Finally, 
the embedding vectors in NLP models can be non-local and hence make the IBP bounds relatively loose, 
because semantically similar words can have very different embedding vectors~\citep{mu2018allbutthetop}. 
In comparison, the adversarial perturbations in image classifications are constrained in small balls (e.g., $\ell_2$ or $\ell_1$), on which IBP can provide tighter bounds. 
}

In this paper, 
we 
propose a \emph{structure-free}  
certified defense method 
that applies to arbitrary models that can be queried in a black-box fashion, 
without any requirement on the model structures. 
Our method is based on the idea of  randomized smoothing, 
which smooths
 the model with random word substitutions 
 build on the synonymous network, 
 and leverage the statistical properties of the randomized ensembles to construct provably certification bounds. 
Similar ideas of provably certification 
using  
randomized smoothing 
have been developed recently in deep learning \citep[e.g.,][]{cohen2019certified, salman2019provably, zhang2020black, lee2019tight}, 
but mainly 
for computer vision tasks whose inputs (images) are in a continuous space \citep{cohen2019certified}. 
Our method admits a substantial extension  
of the randomized smoothing technique to discrete and structured input spaces for NLP. 

We test our method on various types of NLP models, including text CNN \citep{kim2014textcnn}, Char-CNN \citep{zhang2015charnn}, and BERT \citep{devlin2018bert}. 
Our method significantly outperforms the recent IBP-based methods \citep{jia2019certified, huang2019achieving}  
on both IMDB and Amazon text classification. 
In particular, we achieve an 87.35\% certified accuracy on IMDB by applying our method on the state-of-the-art BERT, 
on which previous certified robust methods are not applicable. 

\section{Adversarial Word Substitution}  \label{sec:background}
In a text classification task, 
a model $f(\X)$  maps an   input sentence $\X\in\mathcal{X}$ to a label $c$ in a  set $\mathcal{Y}$ of discrete categories, where
$\X = \x_{1},\ldots, \x_{L}$ is a sentence consisting of $L$ words. 
In this paper, we focus on adversarial word substitution in which
an  attacker 
arbitrarily replaces the words in the sentence by their synonyms according to a synonym table to alert the prediction of the model. 
Specifically, for any word $\x$, we consider a pre-defined synonym set $S_{\x}$ that contains the synonyms of $\x$ (including $\x$ itself). 
We assume the synonymous relation is symmetric, that is, 
$\x$ is in the synonym set of all its synonyms. 
The synonym set $S_{\x}$ can be built based on GLOVE~\citep{pennington2014glove}.

With a given input sentence $\X = \x_{1}$,\ldots, $\x_{L}$, 
the attacker may construct an adversarial sentence ${\X'}=\x'_{1},\ldots,\x'_{L}$  by perturbing at most 
$R\le L$ words $x_i$ in $\X$ to any of their synonyms $ x'_i \in S_{x_i}$,
%
\begin{align*}
\begin{split} 
 & 
 S_{\X} :=  \left \{ 
%
  {\X'}\colon~
  \norm{\X' - \X}_{0} \leq R, ~~\x'_{i}\in S_{\x_{i}}, \forall i
\right\},
 \end{split}
\end{align*}
where $S_{\X}$ denotes the candidate set of adversarial sentences available to the attacker.  
Here $\norm{ \X' - \X}_{0} := \sum_{i=1}^{L}\mathbb{I}\left\{ \x'_{i}\ne\x_{i}\right\}$ is the Hamming distance, with  $\mathbb{I}\{\cdot\}$ the indicator function. 
It is expected that  all  $ \X'  \in S_{\X}$ have the same  semantic meaning as $\X$ for human readers, but they may have different outputs from the model. 
The goal of the attacker is to find 
$ \X' \in S_{\X}$  such that 
$f(\X) \neq f( \X')$. 


\paragraph{Certified Robustness} 
Formally, 
a model $f$ 
 is said to be \emph{certified robust} against word substitution attacking on an input $\X$ if it is able to give consistently correct predictions for all the possible word substitution perturbations, i.e, 
 \begin{align}\label{equ:certified}
     \text{$y = f(\X) = f( \X')$,~~~~ for all $ \X' \in S_\X,$}
 \end{align} 
 where $y$ denotes the true label of sentence $\X$. 
Deciding if $f$ is certified robust can be highly challenging, 
because,
unless additional structural information is available, 
it requires to exam all the candidate sentences in $S_\X$, 
whose size grows exponentially with $R$.  In this work, we mainly consider the case when $R = L$, which is the most challenging case. 
\section{Certifying Smoothed Classifiers} 
Our idea is to replace $f$ with a more smoothed model that is easier to verify 
by averaging the outputs of a set of randomly perturbed inputs based on random word substitutions. 
The  smoothed classifier  $f^{\rs}$ is constructed by introducing random perturbations on the input space, 
\begin{align*} 
& f^{\rs}(\X)=\underset{c\in\mathcal{Y}}{\arg\max}\   
\mathbb{P}_{\Z\sim\Pi_{\X}}\left(f(\Z)=c\right), 
\end{align*}
%
where $\Pi_{\X}$ is a probability distribution on the input space that  prescribes a random perturbation around $\X$. 
For notation, we define 
$$
\grs(\X, c) :=\mathbb{P}_{\Z\sim\Pi_{\X}}\left(f(\Z)=c\right) ,
$$
which is the ``soft score'' of class $c$ under $f^\rs$. 

The perturbation distribution ${\Pi_\X}$ should be chosen properly so that $f^\rs$ forms a close approximation to the original model $f$ (i.e., 
$f^\rs(\X) \approx f(\X)$), and is also sufficiently random to ensure that $f^\rs$ is smooth enough to allow certified robustness (in the sense of Theorem~\ref{thm:lb} below).    
\cutforspace{ 
For vision tasks, Gaussian distribution is a natural choice for $\Pi_{\X}$ as the input is in a continuous Euclidean space  \cite{cohen2019certified}. 
However, 
Gaussian perturbation  can not be directly applied on 
discrete sentence inputs. 
}

In our work, 
we define $\Pi_\X$ to be the uniform distribution  
on a set of random word substitutions. 
Specifically, let $P_{\x}$ be 
a \emph{perturbation set} for word $\x$ in the vocabulary, 
which is different from the \emph{synonym set} $S_\x$. 
In this work, we construct $P_{\x}$ based on the top $K$ nearest neighbors 
under the cosine similarity of GLOVE vectors,
where $K$ is a hyperparameter that controls the size of the perturbation set; see Section \ref{sec:exp} for more discussion on $P_x$. 

For a sentence $\X=\x_{1},\ldots,\x_{L}$, 
the sentence-level perturbation distribution $\Pi_\X$ is 
defined by randomly and independently perturbing each  word $x_i$
to a word in its perturbation set $P_{x_i}$  
with equal probability, that is, 
\vspace{-10pt}
\[
\Pi_{\X}(\Z)=\prod_{i=1}^{L}\frac{\mathbb{I}\left\{ z_{i}\in P_{\x_{i}}\right\} }{\left|P_{\x_{i}}\right|},
\]
\vspace{-10pt}

\noindent where $\Z = z_{1},\ldots, z_{L}$ is the perturbed sentence  and $|P_{x_i}|$ denotes the size of $P_{x_i}$. 
Note that the random perturbation $\Z$ and the adversarial candidate $\X' \in S_\X$ are different. 
%
\begin{figure*}
    \vspace{-15pt}
    \centering
    \includegraphics[scale = 0.55]{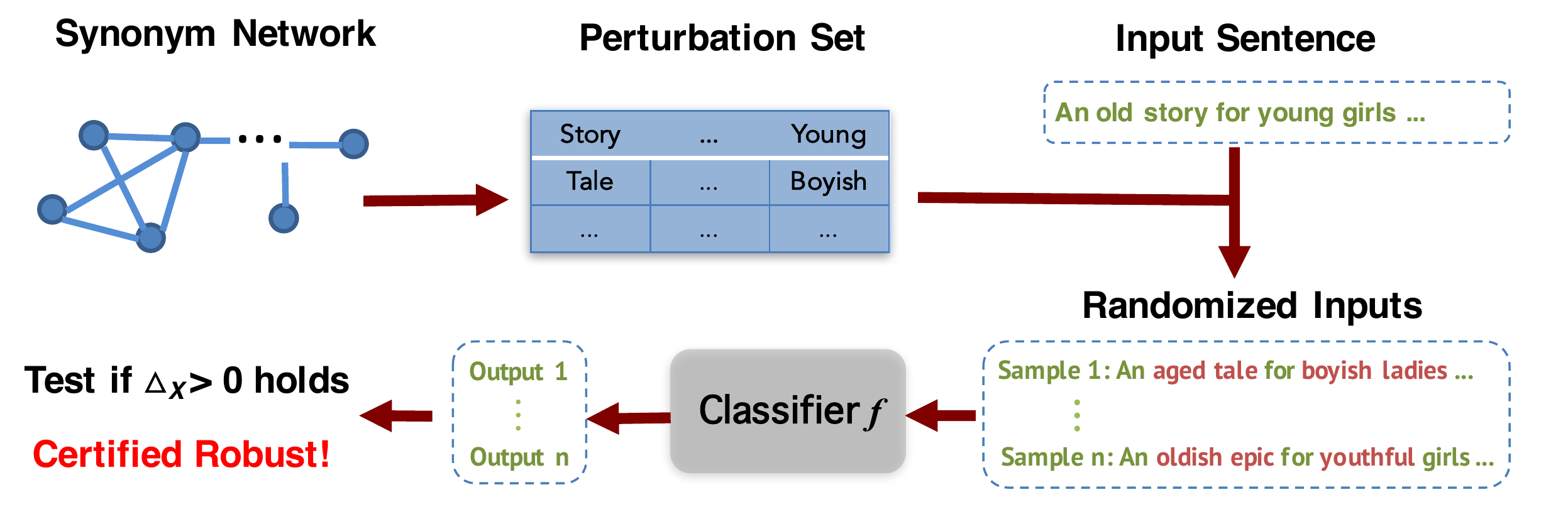}
    \vspace{-5pt}
    \caption{A pipeline of the proposed robustness certification approach.}
    \vspace{-15pt}
    \label{fig:framework}
\end{figure*}

\subsection{Certified Robustness}
We now discuss how to certify the robustness of the smoothed model $f^{\rs}$. 
Recall that $f^\rs$ is certified robust if $y  =  f^\rs(\X')$ for any $\X' \in S_{\X}$, where $y$ is the true label. 
A sufficient condition for this is 
$$
\min_{\X'\in S_{\X}} \grs(\X', y)  \geq 
\max_{\X' \in S_{\X}}\grs({\X'}, c)  ~~~~~\forall c \neq y, 
$$
where the lower bound of $\grs(\X', y)$ on $\X' \in S_\X$ 
is larger than the upper bound of $\grs(\X', c)$ on $\X' \in S_\X$ for every $c\neq y$. 
The key step is hence to calculate the upper and low bounds of $ \grs(\X', c) $ for $\forall c\in \mathcal Y$ and $\X'\in S_\X$, which we address in Theorem~\ref{thm:lb} below. 
All proofs are in Appendix \ref{apx:pf}.
\begin{theorem} \textbf{(Certified Lower/Upper Bounds)} \label{thm:lb}
Assume  the perturbation set $P_{\x}$ is constructed such that 
$\left|P_{\x}\right|=\left|P_{{\x'}}\right|$ for every word $x$ and its synonym ${\x'}\in S_{\x}$. Define 
\[
q_{\x}=\min_{{\x'}\in S_{\x}}{\left|P_{\x}\cap P_{{\x'}}\right|}/{\left|P_{\x}\right|},
\]
where $q_{\x}$ indicates the overlap between  the two different perturbation sets. 
For a given sentence $\X=\x_{1},\ldots,\x_{L}$, 
we sort the words according to $q_{\x}$, such that 
$q_{\x_{i_{1}}}\le q_{\x_{i_{2}}}\le\cdots\le q_{\x_{i_{L}}}$. 
Then
\begin{align*}
\min_{\X'\in S_{\X}}
\grs(\X',  c) 
& \ge\max(\grs(\X, c) -q_\X,0)\\
\max_{\X'\in S_{\X}}
\grs(\X',  c) 
& \le\min(\grs(\X, c) + q_\X,1). 
\end{align*}
where $q_\X:=1-\prod_{j=1}^{R}q_{\x_{i_{j}}}$. 
Equivalently, this says 
\begin{align*}
\left | \grs(\X',  c)  - \grs(\X, c) \right | \leq q_\X,~~~~
 \text{any label $c \in \mathcal Y$}. 
\end{align*}
\end{theorem}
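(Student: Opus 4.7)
The plan is to reduce the bound to the total variation distance between the two perturbation distributions $\Pi_\X$ and $\Pi_{\X'}$, and then to exploit the product structure of $\Pi_\X$ via a coordinate-wise maximal coupling.

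First, I would prove the equivalent form $|\grs(\X',c)-\grs(\X,c)|\le q_\X$; the stated min/max bounds on $\grs(\X',c)$ follow immediately by taking the min/max over $\X'\in S_\X$ and then clipping to $[0,1]$. By definition,
\begin{equation*}
\grs(\X,c)-\grs(\X',c) \;=\; \mathbb{P}_{\Z\sim\Pi_\X}(f(\Z)=c)-\mathbb{P}_{\Z\sim\Pi_{\X'}}(f(\Z)=c),
\end{equation*}
which for any measurable event is bounded in absolute value by $\mathrm{TV}(\Pi_\X,\Pi_{\X'})$. So it suffices to show $\mathrm{TV}(\Pi_\X,\Pi_{\X'})\le q_\X$ whenever $\X'\in S_\X$.

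Next I would compute the TV bound coordinate by coordinate. Since $\Pi_\X=\prod_i\mathrm{Unif}(P_{\x_i})$ and $\Pi_{\X'}=\prod_i\mathrm{Unif}(P_{\x'_i})$, I build a coupling of $(\Z,\Z')$ by performing the maximal coupling independently at each position $i$. This gives $\mathbb{P}(Z_i=Z'_i)=1-\mathrm{TV}(\mathrm{Unif}(P_{\x_i}),\mathrm{Unif}(P_{\x'_i}))$ at each coordinate, and by independence
\begin{equation*}
\mathrm{TV}(\Pi_\X,\Pi_{\X'})\;\le\;\mathbb{P}(\Z\ne \Z')\;=\;1-\prod_{i=1}^{L}\mathbb{P}(Z_i=Z'_i).
\end{equation*}
When $\x'_i=\x_i$ the two coordinate distributions are identical, giving factor $1$. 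When $\x'_i\ne\x_i$ (so $\x'_i\in S_{\x_i}$), the equal-size assumption $|P_{\x_i}|=|P_{\x'_i}|$ makes the TV between two uniforms on equal-size finite sets equal to $1-|P_{\x_i}\cap P_{\x'_i}|/|P_{\x_i}|$, so the coordinate factor is exactly $|P_{\x_i}\cap P_{\x'_i}|/|P_{\x_i}|\ge q_{\x_i}$.

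Finally I would optimize over the Hamming ball $S_\X$. The attacker controls at most $R$ positions where $\x'_i\ne\x_i$, so the product has at most $R$ non-trivial factors, each $\ge q_{\x_i}$. The worst case (smallest product, hence largest TV) is to place the $R$ changes at the indices with the smallest $q$ values, namely $i_1,\ldots,i_R$, giving $\prod_{j=1}^{R}q_{\x_{i_j}}$. Therefore $\mathrm{TV}(\Pi_\X,\Pi_{\X'})\le 1-\prod_{j=1}^{R}q_{\x_{i_j}}=q_\X$, which completes the argument.

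The main obstacle is the product-measure TV step: one has to be careful that the coordinate-wise maximal coupling is genuinely a valid coupling of the product measures (it is, by independence), and that the inequality $\mathrm{TV}(\prod P_i,\prod Q_i)\le 1-\prod(1-\mathrm{TV}(P_i,Q_i))$ is used as an upper bound rather than an equality. Everything else — the per-coordinate TV formula for uniforms on equal-size sets, and the greedy selection of the $R$ smallest $q_{\x_i}$ — is routine.
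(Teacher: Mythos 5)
Your proof is correct, but it takes a genuinely different route from the paper's. You reduce the claim to a total-variation bound, $|\grs(\X',c)-\grs(\X,c)|\le \mathrm{TV}(\Pi_\X,\Pi_{\X'})$, and then control the TV of the two product measures by a coordinate-wise maximal coupling, $\mathrm{TV}(\Pi_\X,\Pi_{\X'})\le 1-\prod_i\bigl(1-\mathrm{TV}_i\bigr)$, using the equal-size assumption to get the per-coordinate value $1-|P_{\x_i}\cap P_{\x'_i}|/|P_{\x_i}|$ and the monotonicity of products in $[0,1]$ to place the worst case at the $R$ smallest $q_{\x_i}$; all of these steps are sound, and you are right that the product-TV step is only an inequality, which is all the theorem needs. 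The paper instead frames $\min_{\X'}\grs(\X',c)$ as a variational problem over all $h:\mathcal{X}\to[0,1]$ subject to $\Pi_\X[h]=\grs(\X,c)$ (Lemma~\ref{lem:first}), applies Lagrangian weak duality, computes $\int(\lambda\, d\Pi_\X-d\Pi_{\X'})_+$ in closed form (Lemma~\ref{lem:analytic}), identifies the explicit worst-case sentence $\X^*$ (Lemma~\ref{lem:optperterb}), and then optimizes over $\lambda\ge 0$. Your argument is shorter and more conceptual, relying on standard coupling machinery, and delivers exactly the same constant $q_\X$. What the paper's heavier machinery buys is reuse downstream: the explicit formula for the positive-part integral and the explicit worst-case $\X^*$ are precisely the ingredients used to construct the classifier $f_*$ in the tightness result (Theorem~\ref{thm:tight}), whereas your TV bound alone would not by itself exhibit that extremal construction. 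For Theorem~\ref{thm:lb} as stated, your proof is complete and loses nothing.
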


The idea is that, with the randomized smoothing, 
the difference between $\grs(\X',  c)$  and $\grs(\X, c)$ is at most $q_\X$ for any adversarial candidate $\X' \in S_\X$. Therefore, we can give adversarial upper and lower bounds of $\grs(\X',  c)$ by $\grs(\X, c)\pm q_\X$, which,  
importantly, 
avoids the difficult adversarial optimization of $\grs(\X', c)$ on ${\X' \in S_\X}$, and instead just needs to evaluate $\grs(\X, c)$ at the original input $\X$. 


We are  ready to describe a practical criterion for checking the certified robustness. 
\begin{proposition} \label{pro:cAcB}
For a sentence $\X$ and its label $y$, we define 
\begin{align*} 
& \cB = \arg\max_{c\in \mathcal Y, c \neq y} \grs(\X, c). 
\end{align*}
Then under the condition of Theorem~\ref{thm:lb}, we can certify that $f(\X') = f(\X) = y$ for any $\X' \in S_{\X}$ if
\begin{align} \label{equ:DeltaX}
\Delta_\X \overset{def}{=} \grs(\X, y) - \grs(\X, \cB) - 2q_{\X} > 0. 
\end{align}
\end{proposition}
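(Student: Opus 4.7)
The plan is to derive the proposition directly from the per-class bound established in Theorem~\ref{thm:lb}, using $\cB$ as the dominant competitor to the true label $y$ in a margin-style argument. The key input is the two-sided inequality $|\grs(\X',c) - \grs(\X,c)| \le q_\X$ for every $\X' \in S_\X$ and every $c \in \mathcal Y$, which means the smoothed score of every class can change by at most $q_\X$ under an adversarial word substitution. Hence it suffices to show that the gap between the score of $y$ and the score of any other class at $\X$ is large enough to absorb twice this perturbation.

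First, I would unpack the definition of $\cB$: for every $c \neq y$ we automatically have $\grs(\X, c) \le \grs(\X, \cB)$. Second, I would apply the lower bound of Theorem~\ref{thm:lb} to $c = y$, obtaining $\grs(\X', y) \ge \grs(\X, y) - q_\X$ for all $\X' \in S_\X$, and independently apply the upper bound to an arbitrary $c \neq y$, obtaining $\grs(\X', c) \le \grs(\X, c) + q_\X \le \grs(\X, \cB) + q_\X$. Third, I would subtract the second from the first to get
\begin{align*}
\grs(\X', y) - \grs(\X', c) \;\ge\; \grs(\X, y) - \grs(\X, \cB) - 2q_\X \;=\; \Delta_\X \;>\; 0,
\end{align*}
for every $\X' \in S_\X$ and every $c \neq y$. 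This shows $y$ is the strict argmax of $\grs(\X', \cdot)$, hence $f^\rs(\X') = y$. Taking $\X' = \X$ as a special case (note $\X \in S_\X$) also yields $f^\rs(\X) = y$, so both equalities in the conclusion hold simultaneously.

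There is no real obstacle here; the statement is essentially a packaging of Theorem~\ref{thm:lb} into a verifiable certificate. The only subtlety worth flagging in the write-up is that the hypotheses of Theorem~\ref{thm:lb} (in particular $|P_\x| = |P_{\x'}|$ for synonyms, and the definition of $q_\X$ from the sorted $q_{\x_i}$'s) are carried over implicitly, and that the conclusion pertains to the smoothed classifier $f^\rs$, which is the classifier being certified throughout this section.
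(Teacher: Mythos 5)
Your proof is correct and follows essentially the same route as the paper: apply the lower bound of Theorem~\ref{thm:lb} to class $y$, the upper bound to every $c \neq y$ (maximized at $\cB$), and observe that the resulting margin is exactly $\Delta_\X$, so $\Delta_\X > 0$ forces $f^\rs(\X') = y$ for every $\X' \in S_\X$ (including $\X$ itself). Your explicit remarks that the conclusion concerns the smoothed classifier $f^\rs$ and that $\X \in S_\X$ handles the $f^\rs(\X) = y$ part are fine and consistent with the paper's intent.
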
 
Therefore, certifying whether the model gives consistently correct prediction reduces to checking if $\Delta_{
\X}$ is positive, which can be easily achieved with Monte Carlo estimation as we show in the sequel.  

\paragraph{Estimating $\grs(\X, c)$ and $\Delta_\X$} 
Recall that $\grs(\X, c)= \mathbb{P}_{\Z\sim\Pi_{\X}}(f(\Z)=c)$. 
We can estimate  $\grs(\X, c)$ with  a Monte Carlo estimator $\sum_{i=1}^{n}\mathbb{I} \{ f(\Z^{(i)})=c\}/n$, 
where $\Z^{(i)}$ are i.i.d. samples from $\Pi_{\X}$. 
And $\Delta_\X$ can be approximated accordingly. 
Using concentration inequality, we can quantify the non-asymptotic approximation error. 
This allows us to construct rigorous statistical procedures to  
reject the null hypothesis that $f^\rs$ is not certified robust at $\X$  (i.e., $\Delta_{\X} \leq 0$)  
with a given significance level (e.g., $1\%$).  
See Appendix \ref{apx:MC} for the algorithmic details of the testing procedure.   

We can see that our procedure is 
\emph{structure-free} in that it only requires the black-box assessment of the output $f(\Z^{(i)})$ of the random inputs, and does not require any other structural information of $f$ and $f^\rs$, which makes our method widely applicable to various types of complex models.  



\paragraph{Tightness} 
A key question is if our bounds are sufficiently tight. 
The next theorem shows that the  lower/upper bounds in  Theorem \ref{thm:lb} are  tight and can not be further improved unless  further information of the model $f$ or $f^\rs$ is acquired.  
\begin{theorem}
\textbf{(Tightness)} \label{thm:tight}
Assume the conditions of Theorem \ref{thm:lb} hold. 
For a model $f$ that satisfies $f^\rs(\X) = y$ and $\cB$ as defined in Proposition~\ref{pro:cAcB}, 
there exists a model $f_*$ such that its related smoothed classifier $\grsstar$ 
satisfies 
$\grsstar(\X, c)=\grs(\X, c)$ for $c = y$ and $c = \cB$, and 
\begin{align*}
\min_{\X'\in S_{\X}}
\grsstar(\X', y)  
& =\max(\grsstar(\X, y)-q_\X,0)\\
\max_{\X'\in S_{\X}}
\grsstar(\X',\cB) 
& =\min(\grsstar(\X, \cB) +q_\X,1),
\end{align*}
where $q_\X$ is defined in Theorem~\ref{thm:lb}. 
\end{theorem}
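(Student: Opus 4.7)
The plan is to construct both a specific adversarial candidate $\X^*\in S_\X$ and a specific model $f_*$ for which the upper/lower bounds of Theorem~\ref{thm:lb} become equalities simultaneously at $\X^*$. The required min/max over $\X'\in S_\X$ then follows by combining this equality at $\X^*$ with Theorem~\ref{thm:lb} applied to $f_*$ itself, which supplies the matching inequalities over all of $S_\X$.

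First, I choose $\X^*$ by replacing, at each of the top $R$ positions $i_1,\dots,i_R$ identified by the ordering of $q_{x_i}$, the word $x_{i_j}$ with the synonym $x^*_{i_j}\in S_{x_{i_j}}$ that attains the minimum defining $q_{x_{i_j}}$; other positions are unchanged. Then $\X^*\in S_\X$, and since $|P_{x^*_{i_j}}|=|P_{x_{i_j}}|$, independence across positions gives $\Pi_\X\bigl(z_{i_j}\in P_{x^*_{i_j}}\ \forall j\bigr) = \prod_{j=1}^R q_{x_{i_j}} = 1-q_\X$, with the same value under $\Pi_{\X^*}$. I then partition the possible perturbed sentences into $A$ (those $\Z$ with $z_{i_j}\in P_{x_{i_j}}\cap P_{x^*_{i_j}}$ for every $j$), $B$ (the rest of the support of $\Pi_\X$), and $C$ (the rest of the support of $\Pi_{\X^*}$), so that $\Pi_\X(A)=\Pi_{\X^*}(A)=1-q_\X$, $\Pi_\X(B)=\Pi_{\X^*}(C)=q_\X$, and $\Pi_\X(C)=\Pi_{\X^*}(B)=0$; the equal-size hypothesis also forces the uniform conditional distributions on $A$ under $\Pi_\X$ and $\Pi_{\X^*}$ to coincide, which is the decisive structural fact.

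Writing $\alpha=\grs(\X,y)$ and $\beta=\grs(\X,\cB)$, in the generic regime $\alpha\ge q_\X$ and $\beta+q_\X\le 1$ I define $f_*=y$ on all of $B$ together with a subset of $A$ of $\Pi_\X$-mass $\alpha-q_\X$; $f_*=\cB$ on all of $C$ together with a disjoint subset of $A$ of $\Pi_\X$-mass $\beta$; and $f_*$ takes any other label elsewhere. Because the conditional measures on $A$ agree under $\Pi_\X$ and $\Pi_{\X^*}$, evaluating by region immediately yields $\grsstar(\X,y)=\alpha$, $\grsstar(\X,\cB)=\beta$, $\grsstar(\X^*,y)=\alpha-q_\X$ and $\grsstar(\X^*,\cB)=\beta+q_\X$. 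Feasibility inside $A$ requires $(\alpha-q_\X)+\beta\le 1-q_\X$, i.e.\ $\alpha+\beta\le 1$, which is automatic because $y\ne\cB$ makes $\grs(\X,y)$ and $\grs(\X,\cB)$ probabilities of disjoint events.

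The main obstacle is the boundary cases where $\alpha<q_\X$ or $\beta+q_\X>1$, which trigger the clipping at $0$ or $1$ in the statement. When $\alpha<q_\X$ I shrink the $y$-set to a subset of $B$ of $\Pi_\X$-mass $\alpha$ and avoid $A$ entirely, which gives $\grsstar(\X,y)=\alpha$ and $\grsstar(\X^*,y)=0$. Symmetrically, when $\beta+q_\X>1$ I fill all of $A\cup C$ with $\cB$ outside the $y$-region, pushing $\grsstar(\X^*,\cB)$ up to $1$. A short case split on the four combinations of these two boundary conditions verifies, using $\alpha+\beta\le 1$ as the only consistency constraint, that $A$, $B$, $C$ always have enough mass to accommodate the prescribed labels and that all four target equalities continue to hold; this completes the construction of $f_*$ and hence proves tightness.
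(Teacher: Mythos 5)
Your proposal is correct and follows essentially the same route as the paper's proof: the same worst-case perturbation $\X^*$, the same partition of the supports of $\Pi_\X$ and $\Pi_{\X^*}$ into their intersection and the two differences (your $A$, $B$, $C$), the same mass-allocation construction of $f_*$ with a case split on whether the bounds are clipped at $0$ or $1$, and Theorem~\ref{thm:lb} applied to $f_*$ to turn the equalities at $\X^*$ into the stated min/max over $S_\X$. One small repair in the case $\grs(\X,\cB)+q_\X>1$ (which forces $\grs(\X,y)<q_\X$): besides labeling all of $A\cup C$ with $\cB$, you must also label a subset of $B$ of $\Pi_\X$-mass $\grs(\X,\cB)-(1-q_\X)$ with $\cB$, disjoint from the $y$-region of mass $\grs(\X,y)$ (feasible precisely because $\grs(\X,y)+\grs(\X,\cB)\le 1$), since otherwise $\grsstar(\X,\cB)=1-q_\X\neq\grs(\X,\cB)$; this is exactly the paper's Case 4.
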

In other words, 
if we access  $\grs$ only through the evaluation of $\grs(\X, y)$ and $\grs(\X, \cB)$,
then the bounds in Theorem~\ref{thm:lb} are the tightest possible that we can achieve, because we can not distinguish between $\grs$ and the $\grsstar$ in Theorem~\ref{thm:tight} with the information available.  

\subsection{Practical Algorithm} \label{sec:prac}

Figure \ref{fig:framework} visualizes the pipeline of the 
proposed approach. Given the synonym sets $S_\X$, we generate the perturbation sets $P_\X$ from it. 
When an input sentence $\X$ arrives,  
we draw perturbed sentences $\{\Z^{(i)}\}$ from $\Pi_\X$ and average their outputs to estimate $\Delta_\X$, which is used to decide if the 
model is certified robust for $\X$. 

\paragraph{Training the Base Classifier $f$}
Our method needs to
start with a base classifier $f$. 
Although it is possible to train $f$ using standard learning techniques, the result can be improved by considering that the method uses the smoothed $f^\rs$, instead of $f$. 
To improve the accuracy of $f^\rs$, 
we introduce a data augmentation induced by the perturbation set. 
%
Specifically, at each training iteration, 
we first sample a mini-batch of data points (sentences) and randomly perturbing the sentences using the perturbation distribution $\Pi_\X$. 
We then apply gradient descent on the model based on the  perturbed mini-batch. 
Similar training procedures  
were also used for Gaussian-based random smoothing on continuous inputs \citep[see e.g.,][]{cohen2019certified}.

Our method can easily leverage powerful pre-trained models such as BERT. In this case, BERT is used to construct feature maps and only the top layer weights are finetuned using the data augmentation method. 

\section{Experiments} \label{sec:exp}
%
We test our method on both IMDB~\citep{maas2011imdb} and  Amazon~\citep{mcauley2013hidden}  text  classification  tasks, with  various types of models, including text CNN~\citep{kim2014textcnn}, Char-CNN~\citep{zhang2015charnn} and BERT~\citep{devlin2018bert}. 
We compare with  the recent IBP-based methods \citep{jia2019certified, huang2019achieving} as baselines.   
Text CNN ~\citep{kim2014textcnn} was used in \citet{jia2019certified} and achieves the best result therein. 
All the baseline models are trained and tuned using the schedules recommended in the corresponding papers. 
We consider the case when $R=L$ during attacking,  which means all words in the sentence can be perturbed simultaneously by the attacker. 
Code for reproducing our results can be found in \url{https://github.com/lushleaf/Structure-free-certified-NLP}.


\paragraph{Synonym Sets}
Similar to \citet{jia2019certified, alzantot2018generating}, we construct the synonym set $S_x$ of word $x$ to be the set of words with $\geq 0.8$  cosine similarity in the GLOVE vector space. 
The word vector space is constructed by post-processing the pre-trained GLOVE vectors~\citep{pennington2014glove}  using the counter-fitted method \citep{mrkvsic2016counter} and the ``all-but-the-top'' method \citep{mu2018allbutthetop} to ensure that synonyms are near to each other while antonyms are far apart.

\paragraph{Perturbation Sets}
We say that two words $x$ and $ x'$ are connected synonymously  if there exists a path of words $x = x_1, x_2, \ldots, x_\ell = x'$, such that all the successive pairs are synonymous. 
Let $B_x$ to be the set of words connected to $x$ synonymously. 
Then we  define the perturbation set $P_x$ to consist of the top $K$ words in $B_x$ with the largest GLOVE cosine similarity if $|B_x| \geq K$, and set $P_x = B_x$ if $|B_x| < K$. 
Here $K$ is a hyper-parameter that controls the size of $P_x$ and hence 
trades off the smoothness and accuracy of $f^\rs$. 
We use $K=100$ by default and investigate its effect in 
Section \ref{apx:trades}.

\paragraph{Evaluation Metric}
We evaluate the certified robustness of a model $f^\rs$ 
on a dataset with the \emph{certified accuracy} \citep{cohen2019certified}, 
which equals the percentage of data points on which 
$f^\rs$ is certified robust, which, for our method, holds when $\Delta_\X > 0$ can be verified. 
\cutforspace{ 
For the IBP-based methods \citep{jia2019certified, huang2019achieving},
the certified accuracy is calculated similarly, with certified robustness holds 
when  the lower bound of the logits of the true labels are larger than the upper bounds of all the other classes based on IBP.  
}

\subsection{Main Results} 
We first demonstrate that adversarial word substitution is able to give strong attack in our experimental setting. Using IMDB dataset, we attack the vanilla BERT \citep{devlin2018bert} with the adversarial attacking method of \citet{jin2019bert}. The vanilla BERT achieves a $91\%$ clean accuracy (the testing accuracy on clean data without attacking), but only a $20.1\%$ adversarial accuracy (the testing accuracy under the particular attacking method by \citet{jin2019bert}). We will show later that our method is able to achieve $87.35\%$ certified accuracy and thus the corresponding adversarial accuracy must be higher or equal to $87.35\%$.

We  compare our method with IBP ~\citep{jia2019certified, huang2019achieving}.
in Table~\ref{tab:textcnn}. 
We can see that our method clearly outperforms the baselines. 
In particular, 
our approach  significantly outperforms IBP on Amazon by improving the $14.00\%$ baseline to $24.92\%$.

Thanks to its structure-free property, 
our algorithm can be easily applied to any pre-trained models and character-level models, 
which is not easily achievable with  \citet{jia2019certified} and \citet{huang2019achieving}.  
%
%
Table ~\ref{tab:model} shows that 
our method can further improve the result using Char-CNN (a character-level model) and 
BERT~\citep{devlin2018bert},  
achieving an $87.35\%$ certified accuracy on IMDB. 
In comparison, the IBP baseline only achieves a $79.74\%$ certified  accuracy under the same setting. 

\begin{table}[btp]
\begin{center}\scalebox{.9}{ 
\begin{tabular}{l|cc}
\toprule
\bf Method & \bf IMDB & \bf Amazon  \\ 
\hline
\citet{jia2019certified} & 79.74 & 14.00 \\
\citet{huang2019achieving} & 78.74 & 12.36 \\
Ours & \bf 81.16 & \bf 24.92 \\
\bottomrule
\end{tabular}}
\end{center}
\caption{\label{tab:textcnn} The certified accuracy of our method and the baselines on the IMDB and Amazon dataset.} 
\end{table}

\begin{table}[btp]
\begin{center}\scalebox{.9}{ 
\begin{tabular}{l|l|c}
\toprule
\bf Method & \bf Model & \bf Accuracy  \\ 
\hline
\citet{jia2019certified} & CNN & 79.74 \\
\citet{huang2019achieving} & CNN & 78.74 \\
\hline
\multirow{3}{*}{Ours} & CNN & 81.16 \\
& Char-CNN & 82.03 \\
& BERT & \bf 87.35 \\
\bottomrule
\end{tabular}}
\end{center}
\caption{\label{tab:model} The certified accuracy of different models and methods on the IMDB dataset.}
\end{table}


\subsection{Trade-Off between Clean Accuracy and Certified Accuracy}
\label{apx:trades}
We 
investigate the trade-off between smoothness and accuracy while tuning $K$ in Table \ref{tab:trades}.
We can see that the clean accuracy decreases when $K$ increases, while 
the gap between the clean accuracy and certified accuracy,
which measures the smoothness, decreases when $K$ increases. The best certified accuracy is achieved when $K=100$. 
%
\begin{table}[hbtp]
\begin{centering}
\begin{tabular}{l|ccccc}
\toprule
$K$ & 20 & 50 & 100 & 250 & 1000 \\
\hline 

\small{Clean (\%)} & \small{88.47} & \small{88.48} & \small{88.09} & \small{84.83} & \small{67.54} \\
\small{Certified (\%)} & \small{65.58} & \small{77.32} & \small{81.16} & \small{79.98} & \small{65.13} \\
\bottomrule
\end{tabular}\caption{\label{tab:trades}Results of the smoothed model $f^\rs$ with  different $K$ on IMDB using text CNN. ``Clean" represents the accuracy on the clean data without adversarial attacking and ``Certified" the certified accuracy. 
}
\par\end{centering}
\end{table}

\section{Conclusion}
We proposed a robustness certification method, which provably guarantees that all the possible perturbations cannot break down the system. Compared with previous work such as  \citet{jia2019certified, huang2019achieving}, our method is structure-free and thus can be easily applied to any pre-trained models (such as BERT) and character-level models (such as Char-CNN). 

The construction of the perturbation set is of critical importance to our method. In this paper, we used a heuristic way based on the synonym  network to construct the perturbation set, which may  not be optimal. In further work, we will explore more efficient ways for constructing the perturbation set. We also plan to generalize our approach to achieve certified robustness against other types of adversarial attacks in NLP,  such as the out-of-list attack. An na\"ive way is to add the ``OOV" token  into the synonyms set of  every word, but potentially better procedures can be further explored. 

\section*{Acknowledgement}
This work is supported in part by NSF CRII 1830161 and NSF CAREER 1846421.

\newpage
\bibliography{acl2020}
\bibliographystyle{acl_natbib}

\newpage
\onecolumn
\appendix
\section{Appendix}

\subsection{Bounding the Error of Monte Carlo Estimation} \label{apx:MC}
As shown in Proposition~\ref{pro:cAcB}, 
the smoothed model $f^\rs$ is certified robust  at an input $\X$ in the sense of \eqref{equ:certified}
if 
\begin{align*} 
\Delta_\X 
& = \grs(\X, y)  
-\grs(\X, \cB) 
-2 q_\X \\
& = \grs(\X, y)  
- \max_{c\ne y}
\grs(\X, c) 
-2q_\X > 0, 
\end{align*}
where $y$ is the true label of $\X$,   
and 
$$
\grs(\X, c) := \mathbb{P}_{\Z\sim\Pi_{\X}}\left(f(\Z)=c\right) 
= 
\mathbb{E}_{\Z\sim\Pi_{\X}}\left[\mathbb I\{ f(\Z)=c\}\right]. 
$$
Assume $\{\Z^{(i)}\}_{i=1}^n$ is an i.i.d. sample 
from $\Pi_{\X}$. 
By Monte Carlo approximation, we can estimate $\grs(\X, c)$ for all $c\in\mathcal{Y}$ jointly, via
 $$
 \hatgrs(\X, c ):=\frac{1}{n}\sum_{i=1}^{n}\mathbb{I}\left\{ f(\Z^{(i)})=c\right\},$$
 and estimate $\Delta_\X$ via 
$$
\hat \Delta_\X := 
\frac{1}{n}\sum_{i=1}^{n}\mathbb{I}\left\{ f(\Z^{(i)})=y\right\}  - 
\max_{c\ne y}\frac{1}{n}\sum_{i=1}^{n}\mathbb{I}\left\{ f(\Z^{(i)})=c\right\} 
- 2 q_\X. 
$$
To develop a rigorous procedure for testing $\Delta_\X >0$, 
we need to bound the non-asymptotic error of the Monte
Carlo estimation, which can be done with a simple application of Hoeffding's concentration inequality and union bound. 
\begin{proposition}\label{pro:concentration}
Assume $\{\Z^{(i)}\}$ is i.i.d. drawn from $\Pi_\X$. 
For any $\delta\in (0,1)$, with probability at least $1-\delta$,
we have 
\begin{align*}
\Delta_\X \ge & 
\hat\Delta_\X - 2\sqrt{\frac{\log\frac{1}{\delta}+\log\left|\mathcal{Y}\right|}{2n}}.
\end{align*}
\end{proposition}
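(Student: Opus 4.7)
The plan is to control the deviation $\hat\Delta_\X - \Delta_\X$ in terms of per-class Monte Carlo deviations $\hatgrs(\X,c) - \grs(\X,c)$, then apply a one-sided Hoeffding bound to each class together with a union bound over $\mathcal{Y}$. The only subtlety is handling the two $\max$ operators: I want to replace the difference of maxima by the deviation at a single, data-independent class.

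First I would reduce the problem. Since $\Delta_\X$ and $\hat\Delta_\X$ both subtract the same constant $2q_\X$, this term cancels, so it suffices to bound
\[
\bigl[\hatgrs(\X,y)-\grs(\X,y)\bigr] \;-\; \bigl[\max_{c\ne y}\hatgrs(\X,c)-\max_{c\ne y}\grs(\X,c)\bigr].
\]
Let $c^\star = \arg\max_{c\ne y}\grs(\X,c)$ (a fixed, non-random class; it does not depend on the sample). Because $\max_{c\ne y}\hatgrs(\X,c)\ge \hatgrs(\X,c^\star)$, we get $-\max_{c\ne y}\hatgrs(\X,c)\le -\hatgrs(\X,c^\star)$, and $\max_{c\ne y}\grs(\X,c) = \grs(\X,c^\star)$ by definition. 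Hence
\[
\hat\Delta_\X - \Delta_\X \;\le\; \bigl[\hatgrs(\X,y)-\grs(\X,y)\bigr] + \bigl[\grs(\X,c^\star)-\hatgrs(\X,c^\star)\bigr].
\]
This is the key step that converts the awkward ``max of empirical means'' into the empirical mean at a single deterministic class, which Hoeffding can handle.

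Next I would apply Hoeffding's inequality per class. For each $c\in\mathcal{Y}$, the random variables $\mathbb{I}\{f(\Z^{(i)})=c\}$ are i.i.d.\ and bounded in $[0,1]$, so the one-sided Hoeffding bound gives, for any $t>0$,
\[
\Pr\!\left(\hatgrs(\X,c)-\grs(\X,c)\ge t\right)\le e^{-2nt^2}, \qquad \Pr\!\left(\grs(\X,c)-\hatgrs(\X,c)\ge t\right)\le e^{-2nt^2}.
\]
I only need the ``upper'' direction for $c=y$ and the ``lower'' direction for $c=c^\star$, but since $c^\star$ is unknown at use time I would actually invoke the lower-direction bound for every $c\ne y$ and take a union bound, covering $|\mathcal{Y}|$ events in total (one upper for $c=y$, one lower for each $c\ne y$). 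Setting $t=\sqrt{(\log(1/\delta)+\log|\mathcal{Y}|)/(2n)}$ and using $e^{-2nt^2}=\delta/|\mathcal{Y}|$ makes the failure probability at most $\delta$.

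Combining the two steps: on the good event, both bracketed terms above are at most $t$, so $\hat\Delta_\X - \Delta_\X \le 2t$, i.e.
\[
\Delta_\X \;\ge\; \hat\Delta_\X - 2\sqrt{\frac{\log(1/\delta)+\log|\mathcal{Y}|}{2n}},
\]
as claimed. There is no real obstacle here; the one place that requires a little care is the reduction from $\max_{c\ne y}$ to the fixed class $c^\star$, because a naive argument would try to handle the empirical argmax (which is random and correlated with the sample). Anchoring to the population argmax $c^\star$ and absorbing the looseness into a union bound over $|\mathcal{Y}|$ is what keeps the proof clean.
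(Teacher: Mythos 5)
Your proof is correct and follows exactly the route the paper intends (the appendix only remarks that the bound is "a simple application of Hoeffding's concentration inequality and union bound"): one-sided Hoeffding per class, anchoring the $\max_{c\neq y}$ term at the population argmax $c^\star$, and a union bound over the $|\mathcal{Y}|$ events giving $t=\sqrt{(\log(1/\delta)+\log|\mathcal{Y}|)/(2n)}$. No gaps; if anything, your observation that only the fixed class $c^\star$ is really needed shows the $\log|\mathcal{Y}|$ factor could be slightly tightened, but the stated bound follows as claimed.
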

%
We can now frame the robustness certification problem into a hypothesis test problem.
Consider the null hypothesis $\text{H}_0$ and alternatively hypothesis $\text{H}_a$:
\begin{align*}
\text{H}_{0}:  &  \Delta_\X \leq 0~~~ (f^{\rs}\text{\ is\ not\ certified\ robust\ to\ }\X)\\
\text{H}_{a}: &  \Delta_\X > 0~~~ (f^{\rs}\text{\ is\ certified\ robust\ to\ }\X).
\end{align*}
Then according to Proposition~\ref{pro:concentration}, we can reject the null hypothesis $\text{H}_0$ with a significance level $\delta$ if 
\[
\hat\Delta_\X - 2\sqrt{\frac{\log\frac{1}{\delta}+\log\left|\mathcal{Y}\right|}{2n}} > 0. 
\]
In all the experiments, we set $\delta = 0.01$ and $n=5000$.
\subsection{Proof of the Main Theorems} \label{apx:pf}
In this section, we give the proofs of the theorems in the main text.

\newcommand{\Xconstraint}{{\X'\in S_\X}}
\renewcommand{\H}{\mathcal{H}_{[0,1]}}

\subsubsection{Proof of Proposition \ref{pro:cAcB}}
According to the definition of $f^\rs$, it is certified robust at $\X$, that is, $y = f^\rs(\X')$ for $\forall \X' \in S_\X$,  if 
\begin{align} \label{equ:grddfdi}
\grs(\X', ~ y) \geq \max_{c\neq y}\grs(\X', c),~~~~~  \X' \in S_\X. 
\end{align}
Obviously 
\begin{align*} 
\grs(\X', ~ y) - \max_{c\neq y}\grs(\X', c) 
& \geq \min_{\X'\in S_\X}\grs(\X', ~ y) - \max_{c\neq y} \max_{\X' \in S_\X}\grs(\X', c)  \\
& \geq \left (\grs(\X, ~ y) - q_\X \right) - \max_{c\neq y} \left (\grs(\X, c) + q_\X \right) 
~~~\text{//by Theorem~\ref{thm:lb}.} \\
& = \Delta_\X. 
\end{align*}
Therefore, $\Delta_\X > 0$ must imply \eqref{equ:grddfdi} and hence certified robustness. 

\subsubsection{Proof of Theorem \ref{thm:lb}}

Our goal is to calculate the upper and lower bounds 
$\max_{\X'\sim \Pi_\X}\grs(\X', c)$ and 
$\min_{\X'\sim \Pi_\X}\grs(\X', c)$.  
Our key idea is to frame the computation of the upper and lower bounds into a variational optimization. 
\begin{lemma}\label{lem:first}
Define $\H$ to be the set of all bounded functions mapping from $\mathcal{X}$ to $[0,1]$,  For any $h \in \H$, define 
\[
\Pi_{\X}[h]=\mathbb{E}_{\Z\sim\Pi_{\X}}[h(\Z)].
\]
Then we have for any $\X$ and $c \in \mathcal Y$,  
\begin{align*} 
& \min_{\X'\sim \Pi_\X}\grs(\X', c) \geq  \min_{h \in \H }\min_{\X'\sim \Pi_\X} \left \{ \Pi_{\X'}[h]  ~~~s.t.~~~  \Pi_{\X}[h]  = \grs(\X, c) \right\} : =  g^{\rs}_{low}(\X, c), 
\\
& \max_{\X'\sim \Pi_\X}\grs(\X', c) \leq  \max_{h \in \H }\max_{\X'\sim \Pi_\X} \left \{ \Pi_{\X'}[h]  ~~~s.t.~~~  \Pi_{\X}[h]  = \grs(\X, c) \right\} : = g^{\rs}_{up}(\X, c). 
\end{align*}
\end{lemma}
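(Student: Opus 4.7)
The plan is to observe that the variational optimizations on the right-hand side enlarge the feasible set from the single indicator function determined by $f$ to the entire class $\H$ of $[0,1]$-valued test functions; the claimed bounds then follow immediately because a constrained minimum taken over a larger feasible set can only be smaller (and a maximum only larger).

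Concretely, for a fixed label $c \in \mathcal{Y}$, I would introduce the indicator function $h_c(\Z) := \mathbb{I}\{f(\Z) = c\}$, which takes values in $\{0,1\} \subset [0,1]$ and hence lies in $\H$. By the definition of $\grs$, for \emph{every} input $\X''$ one has $\Pi_{\X''}[h_c] = \mathbb{E}_{\Z \sim \Pi_{\X''}}[\mathbb{I}\{f(\Z)=c\}] = \grs(\X'', c)$. In particular, taking $\X'' = \X$ shows that $h_c$ satisfies the constraint $\Pi_\X[h_c] = \grs(\X, c)$, so $h_c$ is a feasible point of both variational problems.

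For the lower bound, fix any $\X' \in S_\X$. Since $h_c$ is feasible for the inner problem $\min_{h \in \H}\{\Pi_{\X'}[h] : \Pi_\X[h]=\grs(\X,c)\}$, and its objective value is $\Pi_{\X'}[h_c] = \grs(\X', c)$, we have
\begin{equation*}
\grs(\X', c) \;\geq\; \min_{h \in \H}\bigl\{\Pi_{\X'}[h]\,:\, \Pi_\X[h] = \grs(\X,c)\bigr\}.
\end{equation*}
Taking the minimum over $\X' \in S_\X$ on both sides and swapping the two minima on the right yields $\min_{\X' \in S_\X}\grs(\X', c) \geq g^{\rs}_{\text{low}}(\X,c)$. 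The upper-bound claim is entirely symmetric: $h_c$ is feasible for the corresponding $\max$--$\max$ problem, so $\grs(\X', c) \leq \max_{h \in \H}\{\Pi_{\X'}[h]: \Pi_\X[h] = \grs(\X,c)\}$ pointwise in $\X'$, and maximizing over $\X' \in S_\X$ delivers the stated inequality.

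There is essentially no combinatorial or analytic obstacle to this lemma itself: it is a clean relaxation argument, and the only thing to verify carefully is that $h_c$ is indeed in $\H$ and that it realizes the constraint with equality. The real work, which this lemma merely sets up, will lie downstream in evaluating $g^{\rs}_{\text{low}}$ and $g^{\rs}_{\text{up}}$ in closed form under the specific perturbation distribution $\Pi_\X$ built from the sets $P_{x_i}$, so that the abstract variational bound can be turned into the concrete $\pm\,q_\X$ bound of Theorem~\ref{thm:lb}; that will be the step requiring a combinatorial coupling argument between $\Pi_\X$ and $\Pi_{\X'}$ using $q_{\x} = \min_{\x' \in S_\x} |P_\x \cap P_{\x'}|/|P_\x|$.
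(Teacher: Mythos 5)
Your proposal is correct and follows essentially the same argument as the paper: exhibit the indicator $h_c(\Z)=\mathbb{I}\{f(\Z)=c\}$ as a feasible point of the variational problem (the paper calls it $h_0$), note $\Pi_{\X'}[h_c]=\grs(\X',c)$, and take the min (resp.\ max) over $\X'\in S_\X$. No gaps.
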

\begin{proof}[Proof of Lemma~\ref{lem:first}]
The proof is straightforward. Define $h_0(\X)= \mathbb I\{ f(\X)=c\}$. 
Recall that 
$$
\grs(\X, c) = \mathbb{P}_{\Z\sim\Pi_{\X}}\left(f(\Z)=c\right) = \Pi_{\X}[h_0]. 
$$
Therefore, $h_0$ satisfies the constraints in the optimization, which makes it obvious that 
$$
\grs(\X', c)  = \Pi_{\X'}[h_0] \geq 
  \min_{h \in \H }\left \{ \Pi_{\X'}[h]  ~~~s.t.~~~  \Pi_{\X}[h]  = \grs(\X, c) \right\}.  
$$
Taking $\min_{\X' \in S_\X}$ on both sides yields the lower bound. The upper bound follows the same derivation. 
\end{proof}

Therefore, the problem reduces to deriving bounds for the optimization problems. 
\begin{theorem} \label{thm:constrain}
Under the assumptions of Theorem~\ref{thm:lb}, 
for the optimization problems in Lemma~\ref{lem:first}, 
we have 
\begin{align*} 
 g^{\rs}_{low}(\X, c) \geq \max (\grs(\X,c) - q_\X,~ 0), 
&& 
g^{\rs}_{up}(\X, c) \leq \min(\grs(\X, c) + q_\X, ~ 1). 
\end{align*}
where $q_\X$ is the quantity defined in Theorem \ref{thm:lb} in the main text. 
\end{theorem}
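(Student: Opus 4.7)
The plan is to reduce both bounds to a single uniform inequality: show that for every $h \in \H$ and every $\X' \in S_\X$,
\[
|\Pi_{\X'}[h] - \Pi_\X[h]| \le q_\X.
\]
Once this is in hand, the equality constraint $\Pi_\X[h] = \grs(\X, c)$ immediately gives $\Pi_{\X'}[h] \in [\grs(\X, c) - q_\X,~ \grs(\X, c) + q_\X]$, and intersecting with the trivial range $[0,1]$ (since $h \in \H$ takes values in $[0,1]$) yields both stated inequalities after taking $\min$ or $\max$ over $h$ and $\X' \in S_\X$.

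The core inequality follows from a coupling argument on total variation distance. Since $h$ is $[0,1]$-valued, the standard bound $|\Pi_{\X'}[h] - \Pi_\X[h]| \le d_{\mathrm{TV}}(\Pi_\X, \Pi_{\X'})$ applies, so it suffices to bound the TV distance. I would now exploit the product structure $\Pi_\X = \bigotimes_{i=1}^L \mathrm{Unif}(P_{\x_i})$ and similarly for $\Pi_{\X'}$. At each coordinate $i$, the hypothesis $|P_{\x_i}| = |P_{\x'_i}|$ from Theorem~\ref{thm:lb} lets me construct a maximal coupling of the two uniform marginals whose agreement probability equals $|P_{\x_i}\cap P_{\x'_i}|/|P_{\x_i}|$; the equal-size assumption is essential here, as otherwise the smaller set's uniform distribution could not be embedded in the larger one with full overlap. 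Taking an independent product of these marginal couplings yields a joint coupling of $\Pi_\X$ and $\Pi_{\X'}$ whose two marginals agree on every coordinate with probability $\prod_{i=1}^L |P_{\x_i}\cap P_{\x'_i}|/|P_{\x_i}|$. The coupling characterization of TV distance then gives
\[
d_{\mathrm{TV}}(\Pi_\X, \Pi_{\X'}) \le 1 - \prod_{i=1}^L \frac{|P_{\x_i}\cap P_{\x'_i}|}{|P_{\x_i}|}.
\]

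To finish, I would maximize this bound over $\X' \in S_\X$. Since $\X'$ differs from $\X$ in at most $R$ positions, only at most $R$ factors above are strictly less than $1$, and at each such differing position $i$ the condition $\x'_i \in S_{\x_i}$ forces that factor to be at least $q_{\x_i}$. The supremum is therefore attained by perturbing the $R$ coordinates with the smallest $q_{\x_{i_j}}$ values, yielding
\[
\sup_{\X' \in S_\X} d_{\mathrm{TV}}(\Pi_\X, \Pi_{\X'}) \le 1 - \prod_{j=1}^R q_{\x_{i_j}} = q_\X.
\]

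I expect the main obstacle to be carefully justifying the tensorization step, namely verifying that the independent product of per-coordinate maximal couplings is indeed a valid coupling of the two product measures and that its all-coordinates-agree event has exactly the product probability claimed. Everything after that is arithmetic: combining the TV bound with the linear constraint $\Pi_\X[h] = \grs(\X,c)$ and clipping to $[0,1]$.
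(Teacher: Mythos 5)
Your argument is correct, and it takes a genuinely different route from the paper. You bound $\sup_{\X'\in S_\X}\left|\Pi_{\X'}[h]-\Pi_{\X}[h]\right|$ uniformly over $[0,1]$-valued $h$ by the total variation distance between the two product measures, and control that TV distance by tensorizing per-coordinate maximal couplings: with the equal-size assumption each coordinate's agreement probability is $\left|P_{\x_i}\cap P_{\x'_i}\right|/\left|P_{\x_i}\right|\ge q_{\x_i}$, so the all-agree probability is at least $\prod_{j=1}^{R}q_{\x_{i_j}}$ and the TV distance is at most $q_\X$; the clipping to $[0,1]$ comes for free from $h\in\H$. The paper instead treats the constrained variational problem of Lemma~\ref{lem:first} by Lagrangian duality, computes $\int\left(\lambda\, d\Pi_{\X}-d\Pi_{\X'}\right)_{+}$ in closed form (Lemma~\ref{lem:analytic}), identifies the worst-case perturbed sentence $\X^*$ (Lemma~\ref{lem:optperterb}), and then optimizes over $\lambda\ge0$, where the optimum lands at $\lambda\in\{0,1\}$ and $\lambda=1$ recovers exactly the TV quantity you use. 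Your coupling proof is shorter and more standard, and directly yields the uniform deviation statement $\left|\grs(\X',c)-\grs(\X,c)\right|\le q_\X$; what the paper's heavier machinery buys is the explicit worst-case adversary $\X^*$, which is reused in the construction proving tightness (Theorem~\ref{thm:tight}), and a framework that could in principle absorb additional constraints on $h$. One small caveat: your remark that equal sizes are ``essential'' for the coupling is overstated --- a maximal coupling exists regardless, with agreement probability $\sum_z\min\left(P(z),Q(z)\right)$; the equal-size hypothesis is what makes that probability equal to $\left|P_{\x}\cap P_{\x'}\right|/\left|P_{\x}\right|$ and hence comparable to $q_\x$, which is all you actually need. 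The tensorization step you flag as the main obstacle is standard: the independent product of couplings has the correct marginals, and the all-coordinates-agree event factorizes by independence.
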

Now we proceed to prove Theorem \ref{thm:constrain}. 

\begin{proof}[Proof of Theorem~\ref{thm:constrain}]
We only consider the minimization problem because the maximization follows the same proof. For notation, we denote $p = \grs(\X, c)$. 
Applying the Lagrange multiplier to the constraint optimization problem
and exchanging the min and max, we have 
\begin{align*}
g^{\rs}_{low}(\X, c) = &  \min_{\Xconstraint}\min_{h\in\H}\mathrm{\max_{\lambda\in \mathbb{R}}}\ \Pi_{\X'}[h]-\lambda\Pi_{\X}[h]+\lambda p\\
\ge & \mathrm{\max_{\lambda\in \mathbb{R}}}\min_{\Xconstraint}\min_{h\in\H}\ \Pi_{\X'}[h]-\lambda\Pi_{\X}[h]+\lambda p\\
= & \mathrm{\max_{\lambda\in \mathbb{R}}}\min_{\Xconstraint}\min_{h\in\H}\ \int h(\Z)\left(d\Pi_{\X'}(\Z)-\lambda d\Pi_{\X}(\Z)\right)+\lambda p\\
= & -\mathrm{\max_{\lambda\in \mathbb{R}}}\max_{\Xconstraint}\int\left(\lambda d\Pi_{\X}(\Z)-d\Pi_{\X'}(\Z)\right)_{+}+\lambda p
\\
= & -\mathrm{\max_{\lambda\ge0}}\max_{\Xconstraint}\int\left(\lambda d\Pi_{\X}(\Z)-d\Pi_{\X'}(\Z)\right)_{+}+\lambda p.
\end{align*}
Here $d\Pi_{\X}^{0}(\Z)$ and $d\Pi_{\X'}^{0}(\Z)$ is the counting
measure and $(s)_{+}=\max(s,0)$. Now we calculate $\int\left(\lambda d\Pi_{\X}(\Z)-d\Pi_{\X'}(\Z)\right)_{+}$.

\begin{lemma} \label{lem:analytic}
Given $\x,\x'$, define $n_{\x}=\left|P_{\x}\right|$, $n_{\x'}=\left|P_{\x'}\right|$
and $n_{\x,\x'}=\left|P_{\x}\cap P_{\x'}\right|$. We have the following identity
\begin{align*}
 & \int\left(\lambda d\Pi_{\X}(\Z)-d\Pi_{\X'}(\Z)\right)_{+} 
 \\
 = & \lambda\left[1-\prod_{j\in[L],\x_{j}\neq\x'_{j}}\frac{n_{\x_{j},\x_{j}'}}{n_{\x_{j}}}\right]+\left[\prod_{j\in[L],\x_{j}\neq\x'_{j}}\frac{n_{\x_{j},\x_{j}'}}{n_{\x_{j}}}\right]\left[\lambda-\prod_{j\in[L],\x_{j}\ne\x'_{j}}\frac{n_{\x_{j}}}{n_{\x'_{j}}}\right]_{+}.
\end{align*}
As a result, 
under the assumption that $n_x=\left|P_{\x}\right|=\left|P_{\x'}\right|=n_{x'}$ for every word $x$  and 
its synonym $\x'\in S_{\x}$,  we have 
\begin{align*}
\int  \left(\lambda d\Pi_{\X}(\Z)-d\Pi_{\X'}(\Z)\right)_{+} 
= & 
\lambda\left[1-\prod_{j\in[L],\x_{j}\neq\x'_{j}} 
 \frac{n_{\x_{j},\x_{j}'}}{n_{\x_{j}}}
\right]+\left[\prod_{j\in[L],\x_{j}\neq\x'_{j}} 
\frac{n_{\x_{j},\x_{j}'}}{n_{\x_{j}}}
\right]\left(\lambda-1\right)_{+}.
\end{align*}

\end{lemma}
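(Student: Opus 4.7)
\subsection*{Proof Proposal for Lemma~\ref{lem:analytic}}

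The plan is to reduce the integral $\int (\lambda\, d\Pi_{\X}(\Z) - d\Pi_{\X'}(\Z))_+$ to a finite sum over the support of the two product measures, and then exploit the coordinate-wise product structure to factorize. Let $J = \{j \in [L] : x_j \neq x_j'\}$ be the positions where $\X$ and $\X'$ differ. For $j \notin J$ we have $P_{x_j} = P_{x_j'}$, so the $j$-th factor of $\Pi_\X$ and $\Pi_{\X'}$ coincide and cancel out of the positive-part integrand. For $j \in J$, each coordinate $z_j$ falls into exactly one of four disjoint bins: $P_{x_j}\cap P_{x_j'}$, $P_{x_j}\setminus P_{x_j'}$, $P_{x_j'}\setminus P_{x_j}$, or neither.

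First I would classify each $\Z$ according to which of $\Pi_\X(\Z)$, $\Pi_{\X'}(\Z)$ are nonzero, yielding four cases, two of which trivially contribute zero to the integrand: (i) both zero, and (ii) only $\Pi_{\X'}$ positive (where $\lambda\,d\Pi_\X - d\Pi_{\X'} \leq 0$). Of the remaining two cases, ``only $\Pi_\X$ positive'' requires that there exists at least one $j\in J$ with $z_j\in P_{x_j}\setminus P_{x_j'}$ while $z_j\in P_{x_j}$ for all $j$; its contribution is $\lambda$ times the $\Pi_\X$-mass of such $\Z$, which, by complementary counting against ``$z_j \in P_{x_j}\cap P_{x_j'}$ for every $j\in J$,'' equals $\lambda\bigl(1-\prod_{j\in J} n_{x_j,x_j'}/n_{x_j}\bigr)$. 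The ``both positive'' case requires $z_j \in P_{x_j}\cap P_{x_j'}$ for every $j\in J$; counting such $\Z$ gives $\prod_{j\in J} n_{x_j,x_j'} \cdot \prod_{j\notin J} n_{x_j}$ terms, each contributing $(\lambda/\prod_{j}n_{x_j} - 1/\prod_{j}n_{x_j'})_+$. Factoring out the common product of $n_{x_j}^{-1}$ and cancelling the off-$J$ coordinates converts this into $\bigl[\prod_{j\in J} n_{x_j,x_j'}/n_{x_j}\bigr]\bigl[\lambda - \prod_{j\in J}n_{x_j}/n_{x_j'}\bigr]_+$. Summing these two contributions yields the first identity; specializing to $n_{x_j}=n_{x_j'}$ makes the inner product over $J$ collapse to $1$ and gives the second identity.

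The main obstacle, as I see it, is the algebraic bookkeeping in the ``both positive'' case: one must carefully separate the contributions of $j\in J$ from $j\notin J$ in both the counting of $\Z$ and in the ratios of the densities, and verify that the off-$J$ factors exactly cancel between the count and the denominators of $\Pi_\X$ and $\Pi_{\X'}$. A secondary subtlety is justifying that the $(\cdot)_+$ can be pulled outside the coordinate-wise sum: this works precisely because on the ``both positive'' sub-support the ratio $\lambda\,\Pi_\X(\Z)/\Pi_{\X'}(\Z)$ is constant in $\Z$ (depending only on $\X,\X'$), so the sign of the integrand is the same for every $\Z$ in that sub-support, allowing $(\cdot)_+$ to commute with the summation. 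Once these two points are handled, the remaining steps are routine counting and a single application of the product structure of $\Pi_\X$.
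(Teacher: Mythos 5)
Your proposal is correct and follows essentially the same route as the paper's proof: you partition the support according to where $\Pi_\X$ and $\Pi_{\X'}$ are positive (your four cases collapse to the paper's two non-trivial regions, $S_\X \cap S_{\X'}$ and $S_\X - S_{\X'}$), count the mass of each region using the product structure, and factor the common off-$J$ coordinates out of the positive part exactly as the paper does. The points you flag as subtleties (the cancellation of coordinates with $x_j = x_j'$ and the constancy of the integrand on the ``both positive'' region) are precisely the steps the paper's algebra handles, so no gap remains.
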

We now need to solve the optimization of  $\max_{\Xconstraint}\int\left(\lambda d\Pi_{\X}(\Z)-d\Pi_{\X'}(\Z)\right)_{+}$. 

\newcommand{\txstar}{\tilde{\x}^*}
\begin{lemma} \label{lem:optperterb}
For any word $\x$, define $\txstar=\underset{\x'\in S_{\x}}{\arg\min}\ n_{\x,\x'}/n_{\x}$.
For a given sentence $\X = x_1,\ldots, x_L$, we define an ordering of the words 
$x_{\ell_1}, \ldots, x_{\ell_L}$ such that  
 $n_{\x_{\ell_{i}},\txstar_{\ell_{i}}}/n_{\x_{\ell_{i}}}\le n_{\x_{\ell_{j}},\txstar_{\ell_{j}}}/n_{\x_{\ell_{j}}}$
for any $i\le j$.  
For  a given $\X$ and $R$, we define an adversarial perturbed sentence 
$
\X^* =  x^*_1,\ldots,  x^*_L, $ 
where
$$
 x_i^*  = \begin{cases}
\tilde{\x}_{i}^{*}  & \text{if $i \in [\ell_1, \ldots, \ell_R]$} \\
{\x}_{i}  & \text{if $i \notin [\ell_1, \ldots, \ell_R]$}. \\
\end{cases}
$$
Then  for any $\lambda\ge0$, we have that $\X^*$ is the optimal solution of  $\max_{\Xconstraint}\int\left(\lambda d\Pi_{\X}(\Z)-d\Pi_{\X'}(\Z)\right)_{+}$, that is, 
\[
\max_{\Xconstraint}\int\left(\lambda d\Pi_{\X}(\Z)-d\Pi_{\X'}(\Z)\right)_{+}=\int\left(\lambda d\Pi_{\X}(\Z)-d\Pi_{\X^*}(\Z)\right)_{+}.
\]
\end{lemma}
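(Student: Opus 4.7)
The plan is to reduce the optimization to a simple combinatorial minimization by applying Lemma~\ref{lem:analytic} and then exhibiting the explicit maximizer.

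First, I would substitute the closed-form expression from Lemma~\ref{lem:analytic} into the objective. Writing $r_j = n_{x_j, x_j'}/n_{x_j}$ for the positions $j$ where $x_j \neq x_j'$, and $P(\X') = \prod_{j:\, x_j \neq x_j'} r_j$, the integrand becomes
\begin{align*}
F(\X', \lambda) := \lambda(1 - P(\X')) + P(\X')\,(\lambda - 1)_+.
\end{align*}
The key observation is that for both $\lambda \in [0,1]$ (where $(\lambda-1)_+ = 0$, so $F = \lambda(1-P(\X'))$) and $\lambda > 1$ (where $F = \lambda - P(\X')$), the function $F$ is monotonically decreasing in $P(\X')$. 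Hence, for every $\lambda \ge 0$, maximizing $F$ over $\X' \in S_\X$ is equivalent to \emph{minimizing} $P(\X')$ over $\X' \in S_\X$.

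Second, I would decouple the minimization of $P(\X')$ position by position. Since each factor $r_j \in [0,1]$ (because $P_{x_j} \cap P_{x_j'} \subseteq P_{x_j}$), at every perturbed position $j$ the best choice is independent: pick the synonym that minimizes the local ratio, which by definition is $\tilde{x}_j^* = \arg\min_{x' \in S_{x_j}} n_{x_j, x'}/n_{x_j}$, yielding the factor $q_{x_j} = n_{x_j, \tilde{x}_j^*}/n_{x_j} \le 1$.

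Third, I would choose which positions to perturb. Because each possible factor $q_{x_j} \le 1$, adding more factors to the product can only decrease it, so using the full budget of $R$ perturbations is optimal (choosing not to perturb position $j$ is equivalent to a factor of $1$, which is weakly dominated by $q_{x_j}$). Among all size-$R$ subsets of positions, the product $\prod_{j \in I} q_{x_j}$ is minimized by taking the $R$ smallest values of $q_{x_j}$; this is a standard greedy argument that follows by a pairwise swap since each $q_{x_j} \in [0,1]$. These are precisely the indices $\ell_1, \ldots, \ell_R$ defined in the statement, and the resulting sentence is exactly $\X^*$.

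The main subtlety to be careful about is handling the boundary case $\lambda = 1$ (where $F$ is constant in $\lambda$ but still monotone in $P(\X')$) and verifying that perturbing fewer than $R$ positions is never strictly better — that is, that including any extra position $j$ via $q_{x_j}$ weakly decreases $P(\X')$ even if $q_{x_j} = 1$. Both points follow directly from the bound $q_{x_j} \le 1$ and the monotonicity argument. The greedy swap-optimality of picking the $R$ smallest $q_{x_j}$ values is routine but must be stated explicitly to conclude that $\X^*$ as defined attains the maximum simultaneously for every $\lambda \ge 0$, which is what the lemma requires.
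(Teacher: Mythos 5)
Your proposal is correct and follows essentially the same route as the paper: both substitute the closed-form expression from Lemma~\ref{lem:analytic}, observe that the objective is monotonically decreasing in the overlap ratios $n_{\x_j,\x_j'}/n_{\x_j}$ (equivalently, in their product) for every $\lambda\ge 0$, and conclude that $\X^*$ attains the maximum because it minimizes that product. Your explicit per-position decoupling and pairwise-swap argument is simply a more detailed spelling-out of the paper's sorted-ratio domination step, so no substantive difference or gap.
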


Now by Lemma \ref{lem:optperterb}, the lower bound becomes
\begin{align}
g^{\rs}_{low}(\X, c) 
= ~& -\mathrm{\max_{\lambda\ge0}}\max_{\Xconstraint}\int\left(\lambda d\Pi_{\X}(\Z)-d\Pi_{\X'}(\Z)\right)_{+}+\lambda p \notag \\
 =~ &   - \mathrm{\max_{\lambda\ge0}}\int\left(\lambda d\Pi_{\X}(\Z)-d\Pi_{\X^*}(\Z)\right)_{+}+\lambda p\notag \\
= ~&  \mathrm{\max_{\lambda\ge0}}\mathrm{\ }(p-q_\X)\lambda-(1-q_\X)(\lambda-1)_{+} \label{equ:dfdfdjijiojioj}\\
= ~ & \max( p - q_{\X}, ~ 0 ), \notag
\end{align}
where $q_\X$ is consistent with the definition in Theorem~\ref{thm:lb}: \[
q_\X=1-\prod_{j\in[L],\x_{j}\neq\tilde{\x}_{j}^{*}}\frac{n_{\x_{j},\tilde{\x}_{j}^{*}}}{n_{\x_{j}}} = 
1- \prod_{j=1}^R q_{x_{\ell_j}}.
\]
Here equation (\ref{equ:dfdfdjijiojioj}) is by calculation using the assumption of Theorem \ref{thm:lb}.
The optimization of $\max_{\lambda\geq 0}$ in \eqref{equ:dfdfdjijiojioj} is an elementary step: 
if $p\le q$, we have $\lambda^{*}=0$ with
solution $0$; if $p\ge q$, we have $\lambda^{*}=1$ with solution $(p-q_\X)$.
This finishes the proof of the lower bound. The proof the upper bound follows similarly. 
\end{proof}

\paragraph{Proof of Lemma \ref{lem:analytic}}

Notice that we have
\begin{align*}
  \int\left(\lambda d\Pi_{\X}(\Z)-d\Pi_{\X'}(\Z)\right)_{+}
= & \sum_{\Z\in S_{\X'}\cap S_{\X}}\left(\lambda\left|S_{\X}\right|^{-1}-\left|S_{\X'}\right|^{-1}\right)_{+}
+  \lambda\sum_{\Z\in S_{\X}-S_{\X'}}\left|S_{\X}\right|^{-1}\\
= & \left|S_{\X'}\cap S_{\X}\right|\left(\lambda\left|S_{\X}\right|^{-1}-\left|S_{\X'}\right|^{-1}\right)_{+}
+  \lambda\left|S_{\X}-S_{\X'}\right|\left|S_{\X}\right|^{-1}.
\end{align*}
Also notice that $\left|S_{\X}\right| =\prod_{j=1}^{L}n_{\x_{j}}$; $\left|S_{\X'}\right| =\prod_{j=1}^{L}n_{\x_{j}'}$; $\left|S_{\X'}\cap S_{\X}\right| = \prod_{j=1}^{L}n_{\x_{j},\x_{j}'}$ and $\left|S_{\X}-S_{\X'}\right| =\prod_{j=1}^{L}n_{\x_{j}}-\prod_{j=1}^{L}n_{\x_{j},\x_{j}'}$.
Plugging in the above value, we have 
\begin{align*}
  \left|S_{\X}-S_{\X'}\right|\left|S_{\X}\right|^{-1} 
= & \frac{\prod_{j=1}^{L}n_{\x_{j}}-\prod_{j=1}^{L}n_{\x_{j},\x_{j}'}}{\prod_{j=1}^{L}n_{\x_{j}}}\\
= & 1-\prod_{j=1}^{L}\frac{n_{\x_{j},\x_{j}'}}{n_{\x_{j}}}\\
= & 1-\prod_{j\in[L],\x_{j}\neq\x'_{j}}\frac{n_{\x_{j},\x_{j}'}}{n_{\x_{j}}}.
\end{align*}
And also,
\begin{align*}
  \left(\lambda\left|S_{\X}\right|^{-1}-\left|S_{\X'}\right|^{-1}\right)_{+}
= & \left(\lambda\prod_{j=1}^{L}n_{\x_{j}}^{-1}-\prod_{j=1}^{L}n_{\x_{j}'}^{-1}\right)_{+}\\
= & \left(\lambda\prod_{j\in[L],\x_{j}=\x'_{j}}n_{\x_{j}}^{-1}\prod_{j\in[L],\x_{j}\ne\x'_{j}}n_{\x_{j}}^{-1}-\prod_{j\in[L],\x_{j}=\x'_{j}}n_{\x_{j}}^{-1}\prod_{j\in[L],\x_{j}\ne\x'_{j}}n_{\x'_{j}}^{-1}\right)_{+}\\
= & \prod_{j\in[L],\x_{j}=\x'_{j}}n_{\x_{j}}^{-1}\left(\lambda\prod_{j\in[L],\x_{j}\ne\x'_{j}}n_{\x_{j}}^{-1}-\prod_{j\in[L],\x_{j}\ne\x'_{j}}n_{\x'_{j}}^{-1}\right)_{+}.
\end{align*}
Plugging in the above value, we have 
\begin{align*}
  \left|S_{\X'}\cap S_{\X}\right|\left(\lambda\left|S_{\X}\right|^{-1}-\left|S_{\X'}\right|^{-1}\right)_{+}
= & \prod_{j=1}^{L}n_{\x_{j},\x_{j}'}\left(\lambda\left|S_{\X}\right|^{-1}-\left|S_{\X'}\right|^{-1}\right)_{+}\\
= & \prod_{j\in[L],\x_{j}=\x'_{j}}n_{\x_{j}}\prod_{j\in[L],\x_{j}\neq\x'_{j}}n_{\x_{j},\x_{j}'}\left(\lambda\left|S_{\X}\right|^{-1}-\left|S_{\X'}\right|^{-1}\right)_{+}\\
= & \prod_{j\in[L],\x_{j}\neq\x'_{j}}n_{\x_{j},\x_{j}'}\left(\lambda\prod_{j\in[L],\x_{j}\ne\x'_{j}}n_{\x_{j}}^{-1}-\prod_{j\in[L],\x_{j}\ne\x'_{j}}n_{\x'_{j}}^{-1}\right)_{+}\\
= & \prod_{j\in[L],\x_{j}\neq\x'_{j}}n_{\x_{j},\x_{j}'}\prod_{j\in[L],\x_{j}\ne\x'_{j}}n_{\x_{j}}^{-1}\left(\lambda-\prod_{j\in[L],\x_{j}\ne\x'_{j}}\frac{n_{\x_{j}}}{n_{\x'_{j}}}\right)_{+}\\
= & \left(\prod_{j\in[L],\x_{j}\neq\x'_{j}}\frac{n_{\x_{j},\x_{j}'}}{n_{\x_{j}}}\right)\left(\lambda-\prod_{j\in[L],\x_{j}\ne\x'_{j}}\frac{n_{\x_{j}}}{n_{\x'_{j}}}\right)_{+}.
\end{align*}
Combining all the calculation, we  get 
\begin{align*}
&  \int\left(\lambda d\Pi_{\X}(\Z)-d\Pi_{\X'}(\Z)\right)_{+} 
\\
= & \lambda\left[1-\prod_{j\in[L],\x_{j}\neq\x'_{j}}\frac{n_{\x_{j},\x_{j}'}}{n_{\x_{j}}}\right]+\left[\prod_{j\in[L],\x_{j}\neq\x'_{j}}\frac{n_{\x_{j},\x_{j}'}}{n_{\x_{j}}}\right]\left[\lambda-\prod_{j\in[L],\x_{j}\ne\x'_{j}}\frac{n_{\x_{j}}}{n_{\x'_{j}}}\right]_{+}.
\end{align*}

\paragraph{Proof of Lemma \ref{lem:optperterb}}

It is sufficient to proof that, for any $\X'\neq\X^*$, we have
\[
\int\left(\lambda d\Pi_{\X}(\Z)-d\Pi_{\X^*}(\Z)\right)_{+}\ge\int\left(\lambda d\Pi_{\X}(\Z)-d\Pi_{\X'}(\Z)\right)_{+}.
\]
Notice that for any $\lambda\ge0$, define
\[
Q(\X,\X'')=\lambda\left[1-\prod_{j\in[L],\x_{j}\neq\x'_{j}}\frac{n_{\x_{j},\x_{j}''}}{n_{\x_{j}}}\right]+\left[\prod_{j\in[L],\x_{j}\neq\x'_{j}}\frac{n_{\x_{j},\x_{j}''}}{n_{\x_{j}}}\right]\left(\lambda-1\right)_{+}.
\]
Given any \X, we can view $Q(\X,\X'')$ as the function of $n_{\x_{i},\x_{i}''}/n_{\x_{i}}$, $i\in[L]$. And $Q(\X,\X'')$ is a decreasing function of $n_{\x_{i},\x_{i}''}/n_{\x_{i}}$
for any $i\in[L]$ when fixing $\frac{n_{\x_{j},\x_{j}''}}{n_{\x_{j}}}$ for all other $j \neq i$. Suppose $\tilde{r}_k$ is the $k$-th smallest quantities of $n_{\x_{i},\tilde{\x}_{i}^{*}}/n_{\x_{i}}$, $i\in[L]$ and $r'_k$ is the $k$-th smallest quantities of $n_{\x_{j},\tilde{\x}_{j}^{*}}/n_{\x_{i}}$, $i\in[L]$. By the construction of $\X^*$, we have $\tilde{r}_k \le r'_k$ for any $k\in[L]$. This implies that 
\[
Q(\X,\X^*)\ge Q(\X,\X').
\]




\subsubsection{Proof of Theorem \ref{thm:tight}}
We denote $\grs(\X, y) = p_A$, $\grs(\X, \cB) = p_B$ and $q = q_\X$ in this proof for simplicity.
The $\X^*$ below is the one defined in the proof of Lemme \ref{lem:optperterb}.
Our proof is based on constructing a randomized smoothing classifier that
satisfies the desired property we want to prove.

\paragraph{Case 1 $p_{A}\ge q$ and $p_{B}+q\le1$}
Note that in this case $\left|S_{\X}\cap S_{\X^{*}}\right|/\left|S_{\X}\right|=1-q\ge(p_{A}-q)+p_{B}$,
where the inequality is due to $p_{A}+p_{B}\le1$. Therefore, we can
choose set $U_{1}$ and $U_{2}$ such that $U_{1}\subseteq S_{\X}\cap S_{\X^{*}}$;
$U_{2}\subseteq S_{\X}\cap S_{\X^{*}}$; $U_{1}\cap U_{2}=\emptyset$;
$\left|U_{1}\right|/\left|S_{\X}\right|=p_{A}-q$ and $\left|U_{2}\right|/\left|S_{\X}\right|=p_{B}$. We
define the classifier:
\[
f^{*}(\Z)=\begin{cases}
y & \text{if}\ \Z\in\left(S_{\X}-S_{\X^{*}}\right)\cap U_{1}\\
\cB & \text{if}\ \Z\in \left(S_{\X^{*}}-S_{\X}\right) \cup U_{2} \\
\text{other class ($c \neq y$ or $\cB$)} & \text{if}\ \Z\in S_{\X}\cap S_{\X^{*}}-\left(U_{1}\cup U_{2}\right)\\
\text{any\ class ($c\in \mathcal Y$)} & \text{otherwise}
\end{cases}
\]
This classifier is well defined for binary classification because
$S_{\X}\cap S_{\X^{*}}-\left(U_{1}\cup U_{2}\right)=\emptyset$.

\paragraph{Case 2 $p_{A}<q$ and $p_{B}+q\le1$}

In this case, we can choose set $U_{1}$ and $U_{2}$ such that $U_{1}\subseteq S_{\X}-S_{\X^{*}}$;
$U_{2}\subseteq S_{\X}\cap S_{\X^{*}}$; $\left|U_{1}\right|/\left|S_{\X}\right|=p_{A}$
and $\left|U_{2}\right|/\left|S_{\X}\right|=p_{B}$. We define the classifier: 
\[
f^{*}(\Z)=\begin{cases}
y & \text{if}\ \Z\in U_{1}\\
\cB & \text{if}\ \Z\in U_{2}\cup\left(S_{\X^{*}}-S_{\X}\right)\\
\text{other class ($c \neq y$ or $\cB$)} & \text{if}\ \Z\in S_{\X}-\left(U_{1}\cup U_{2}\right)\\
\text{any\ class ($c\in \mathcal Y$)} & \text{otherwise}
\end{cases}
\]
This classifier is well defined for binary classification because
$S_{\X}-\left(U_{1}\cup U_{2}\right)=\emptyset$.

\paragraph{Case 3 $p_{A}\ge q$ and $p_{B}+q>1$}

This case does not exist since we would have $p_{A}+p_{B}>1$.

\paragraph{Case 4 $p_{A}<q$ and $p_{B}+q>1$}

We choose set $U_{1}$ and $U_{2}$ such that $U_{1}\subseteq S_{\X}-S_{\X^{*}}$; 
$U_{2}\in S_{\X}-S_{\X^{*}}$; $U_{1}\cap U_{2}=\emptyset$; $\left|U_{1}\right|/\left|S_{\X}\right|=p_{A}$
and $\left|U_{2}\right|/\left|S_{\X}\right|=p_{B}-(1-q)$. Notice that the intersect of
$U_{1}$ and $U_{2}$ can be empty as $\left|U_{1}\right|/\left|S_{\X}\right|+\left|U_{2}\right|/\left|S_{\X}\right|=p_{A}+p_{B}-(1-q)\le1-(1-q)=q=\left|S_{\X}-S_{\X^{*}}\right|/\left|S_{\X}\right|$.
We define the classifier:

\[
f^{*}(\Z)=\begin{cases}
y & \text{if}\ \Z\in U_{1}\\
\cB & \text{if}\ \Z\in U_{2}\cup S_{\X^{*}}\\
\text{other class ($c \neq y$ or $\cB$)} & \text{if}\ \Z\in\left(S_{\X}-S_{\X^{*}}\right)-\left(U_{1}\cup U_{2}\right)\\
\text{any\ class ($c\in \mathcal Y$)} & \text{otherwise}
\end{cases}
\]
This classifier is well defined for binary classification because
$S_{\X}-S_{\X^{*}}-\left(U_{1}\cup U_{2}\right)=\emptyset$.

It can be easily verified that for each case, the defined classifier
satisfies all the conditions in Theorem \ref{thm:tight}.

\section{Additional Experiment Details}

We set $R = L$ in adversarial attacking, that is, 
all  words in the sentence can be perturbed simultaneously by the attacker. 
We use 5,000 random draws in the Monte Carlo estimation of $\Delta_\X$, and use the same method in \citet{jia2019certified} to tune the hyper-parameters when training the base models {e.g. learning rate, batch size and the schedule of loss function}. 
For the IMDB dataset, we train the IBP models and ours for 60 and 10 epochs, respectively. For the Amazon dataset, we train the IBP models and ours for 100 and 20 epochs, respectively.

We test our algorithm on two different datasets, IMDB and Amazon.
The IMDB movie review dataset~\citep{maas2011imdb} is a sentiment classification dataset. 
It consists of 50,000 movie review comments with binary sentiment labels.
The Amazon review dataset~\citep{mcauley2013hidden} is an extremely large dataset
that contains 34,686,770 reviews with 5 different types of labels. Similar to \citet{cohen2019certified}, we test the models on randomly selected subsets of the  test set with 1,250 and 6,500 examples for IMDB and Amazon dataset, respectively.

\end{document}